\documentclass{article} 
\usepackage{times}
\usepackage{amsfonts}
\usepackage{amsmath}
\usepackage{graphicx} 
\usepackage{subfigure} 

\newcommand*{\defeq}{\stackrel{\text{def}}{=}}

\usepackage{natbib}

\usepackage{amsthm}

\newtheorem{corollary}{Corollary}
\newtheorem{theorem}{Theorem}

\newif\ifenoughspace
\enoughspacefalse

\title{Estimating or Propagating Gradients Through Stochastic Neurons}

\author{
Yoshua Bengio\\
Department of Computer Science and Operations Research\\
University of Montreal\\
Montreal, H3C 3J7 \\
}

%



\newcommand{\sigm}{\sigma}

\newcommand{\one}{\mathbf{1}}


\begin{document}

\maketitle

\begin{abstract}
  Stochastic neurons can be useful for a number of reasons in deep learning models,
but in many cases they pose a challenging problem: how to estimate the gradient of a
loss function with respect to the input of such stochastic neurons, i.e., can
we ``back-propagate'' through these stochastic neurons?
We examine this question, existing approaches,
and present two novel families of solutions, applicable in different settings.
In particular, it is demonstrated that a simple biologically plausible formula gives
rise to an an unbiased (but noisy) estimator of the gradient with respect
to a binary stochastic neuron firing probability. Unlike other estimators which
view the noise as a small perturbation in order to estimate gradients by finite
differences, this estimator is unbiased even without assuming that the stochastic
perturbation is small. This estimator is also interesting
because it can be applied in very general settings which do not allow gradient
back-propagation, including the estimation of the gradient with respect to future rewards,
as required in reinforcement learning setups. We also propose an approach to approximating
this unbiased but high-variance estimator by learning to predict it using a biased
estimator. The second approach we propose assumes
that an estimator of the gradient can be back-propagated and it provides an unbiased
estimator of the gradient, but can only work with non-linearities unlike the hard threshold,
but like the rectifier, that are not flat for all of their range.
This is similar to traditional sigmoidal units but has the advantage that
for many inputs, a hard decision (e.g., a 0 output) can be produced, which would
be convenient for conditional computation and achieving sparse representations
and sparse gradients.
\end{abstract}

\section{Introduction and Background}

Many learning algorithms and in particular those based on neural networks 
or deep learning rely on gradient-based learning. To compute exact gradients,
it is better if the relationship between parameters and the training objective
is continuous and generally smooth. If it is only constant by parts, i.e., mostly flat, then
gradient-based learning is impractical. This was what motivated the
move from neural networks made of so-called formal neurons, with a hard
threshold output, to neural networks whose units are based on a
sigmoidal non-linearity, and the well-known back-propagation 
algorithm to compute the gradients~\citep{Rumelhart86b}.

We call {\em computational graph} or {\em flow graph} the graph that
relates inputs and parameters to outputs and training criterion.  Although
it had been taken for granted by most researchers that smoothness of this
graph was a necessary condition for exact gradient-based training methods
to work well, recent successes of deep networks with rectifiers and other
``semi-hard''
non-linearities~\citep{Glorot+al-AI-2011-small,Krizhevsky-2012-small,Goodfellow+al-ICML2013-small}
clearly question that belief: see Section~\ref{sec:semi-hard} for a deeper
discussion.

In principle, even if there are hard decisions (such as the
treshold function typically found in formal neurons) in the computational
graph, it is possible to obtain {\em estimated gradients} by introducing
{\em perturbations} in the system and observing the effects. Although
finite-difference approximations of the gradient 
appear hopelessly inefficient (because independently perturbing
each of $N$ parameters to estimate its gradient would be $N$ times
more expensive than ordinary back-propagation), another option
is to introduce {\em random perturbations}, and this idea has
been pushed far (and experimented on neural networks for control)
by \citet{Spall-1992} with the
Simultaneous Perturbation Stochastic Approximation (SPSA) algorithm.

As discussed here (Section~\ref{sec:semi-hard}), semi-hard non-linearities
and stochastic perturbations can be combined to obtain reasonably
low-variance estimators of the gradient, and a good example of that success
is with the recent advances with {\em
  dropout}~\citep{Hinton-et-al-arxiv2012,Krizhevsky-2012,Goodfellow+al-ICML2013-small}. The
idea is to multiply the output of a non-linear unit by independent binomial
noise.  This noise injection is useful as a regularizer and it does slow
down training a bit, but not apparently by a lot (maybe 2-fold), which is very
encouraging. The symmetry-breaking and induced sparsity may also compensate
for the extra variance and possibly help to reduce ill-conditioning, as
hypothesized by~\citet{Bengio-arxiv-2013}.

However, it is appealing to consider noise whose amplitude can be modulated
by the signals computed in the computational graph, such as with
{\em stochastic binary neurons}, which output a 1 or a 0 according
to a sigmoid probability. Short of computing an average over an
exponential number of configurations, it would seem that computing the
exact gradient (with respect to the average of the loss over all possible
binary samplings of all the stochastic neurons in the neural network)
is impossible in such neural networks. The question is whether good
estimators (which might have bias and variance) can be computed. We
show in Section~\ref{sec:unbiased} that one can indeed produce
an unbiased estimator of the gradient, and that this gradient is
even cheaper to compute than with the back-propagation algorithm, 
since it does not require a backwards pass. To compensate its possibly
high variance, we also propose in Section~\ref{sec:biased} a biased
but lower-variance estimator that is based on {\em learning a correction
factor to deterministically transform a biased but low-variance estimator 
into a less biased one, with the same variance}.

In a first attempt to study the question of the efficiency of such
estimators, in Section~\ref{sec:correlation}, we first show that the proposed
unbiased estimator can be seen as basically estimating the correlation
between the perturbation and the training loss. We then show that the
Boltzmann machine gradient can also be interpreted as a form of correlation-based
estimator, but missing an additive normalization. 
This is encouraging, since various Boltzmann machines (in
particular the restricted Boltzmann machine) have been quite successful in
recent years~\citep{Hinton06-small,Bengio-2009-book}.

\subsection{More Motivations}

One motivation for studying stochastic neurons is that stochastic behavior
may be a required ingredient in {\em modeling biological neurons}. The
apparent noise in neuronal spike trains could come from an actual noise
source or simply from the hard to reproduce changes in the set of input
spikes entering a neuron's dendrites. Until this question is resolved by
biological observations, it is interesting to study how such noise, which
has motivated the Boltzmann machine~\citep{Hinton84}, may impact
computation and learning in neural networks.

Stochastic neurons with binary outputs are also interesting because
they can easily give rise to {\em sparse representations} (that have many zeros), 
a form of regularization that has been used in many representation
learning algorithms~\citep{Bengio-2009-book,Bengio-Courville-Vincent-TPAMI2013}.
Sparsity of the representation corresponds to the prior that, for a given
input scene $x$, most of the explanatory factors are irrelevant (and that
would be represented by many zeros in the representation). 

Semi-hard stochastic neurons such as those studied in Section~\ref{sec:semi-hard}
also give rise to {\em sparse gradients}, i.e., such that for most
examples, the gradient vector (with respect to parameters) has many zeros.
Indeed, for weights into units that are shut off or are in a flat
saturation region (e.g., at 0 or 1), the derivative will be zero.
As argued in \citet{Bengio-arxiv-2013}, sparse gradients may be useful
to reduce the optimization difficulty due to ill-conditioning, by
reducing the number of interactions between parameters to those parameters
that are simultaneously ``active'' (with a non-zero gradient) for a particular
example.

As argued by~\citet{Bengio-arxiv-2013}, sparse representations may
be a useful ingredient of {\em conditional computation}, by which
only a small subset of the model parameters are ``activated'' (and need
to be visited) for any particular example, thereby greatly reducing the
number of computations needed per example. Sparse gating units may
be trained to select which part of the model actually need to be computed
for a given example.

Binary representations are also useful as keys for a hash table, as in the
semantic hashing algorithm~\citep{Salakhutdinov+Geoff-2009}.  In the
latter, zero-mean Gaussian noise is added to the pre-sigmoid activation of
the code neuron (whose output will be used as hash keys). As the amount of
noise is increased, this forces the weights and biases to also increase to
avoid losing too much information in the sigmoid output, thereby gradually
forcing these units into saturation, i.e., producing a nearly 0 or nearly 1
output. Hence one option would be to train hard-decision neurons by
gradually annealing the magnitude of the parameters, turning soft-output
easy to train units into hard-decision binary-output units. However, it
means that during training, we would not be able to take advantage of the hard zeros 
(only available at the end of training). Since both conditional computatation
and sparse gradients are motivated by speeding up training, this would not
be an interesting solution.

Trainable stochastic neurons would also be useful inside recurrent networks
to take hard stochastic decisions about temporal events at different
time scales. This would be useful to train multi-scale temporal hierarchies
\footnote{Daan Wierstra, personal communication} such that back-propagated
gradients could quickly flow through the slower time scales. Multi-scale
temporal hierarchies for recurrent nets have already been proposed and
involved exponential moving averages of different time
constants~\citep{ElHihi+Bengio-nips8}, where each unit is still updated
after each time step.  Instead, identifying key events at a high level of
abstraction would allow these high-level units to only be updated when
needed (asynchronously), creating a fast track (short path) for gradient
propagation through time.

\subsection{Prior Work}

The idea of having stochastic neuron models is of course very old, with one of the major
family of algorithms relying on such neurons being the Boltzmann
machine~\citep{Hinton84}. In Section~\ref{sec:BM} we study a connection
between Boltzmann machine log-likelihood gradients and perturbation-based
estimators of the gradient discussed here.

Another biologically motivated proposal for synaptic strength learning
was proposed by \citet{Fiete+Seung-2006}. It is based on small zero-mean
{\em i.i.d.} perturbations applied at each stochastic neuron potential
(prior to a non-linearity) and a Taylor expansion of the expected
reward as a function of these variations. \citet{Fiete+Seung-2006}
end up proposing a gradient estimator that looks like a {\em correlation
between the reward and the perturbation}, just like what we obtain
in Section~\ref{sec:unbiased}. However, their estimator is only
unbiased in the limit of small perturbations.

Gradient estimators based on stochastic perturbations have been shown
for long~\citep{Spall-1992} to be much more efficient than standard finite-difference
approximations. Consider $N$ quantities $a_i$ to 
be adjusted in order to minimize an expected loss $L(a)$. A finite difference
approximation is based on measuring separately the effect of changing each
one of the parameters, e.g., through $\frac{L(a) - L(a - \epsilon e_i)}{\epsilon}$,
or even better, through $\frac{L(a + \epsilon e_i) - L(a - \epsilon e_i)}{2 \epsilon}$,
where $e_i = (0,0,\cdots,1,0,0,\cdots,0)$ where the 1 is at position $i$. With $N$
quantities (and typically $O(N)$ computations to calculate $L(a)$), the
computational cost of the gradient estimator is $O(N^2)$. Instead, a perturbation-based
estimator such as found in 
Simultaneous Perturbation Stochastic Approximation (SPSA) \citep{Spall-1992}
chooses a random perturbation vector $z$ (e.g., isotropic Gaussian noise
of variance $\sigma^2$) and estimates the gradient of the expected loss with
respect to $a_i$ through $\frac{L(\theta + z) - L(\theta - z)}{2 z_i}$.
So long as the perturbation does not put too much probability around 0,
this estimator is as efficient as the finite-difference estimator but
requires $O(N)$ less computation. However, like the algorithm proposed
by \citet{Fiete+Seung-2006} this estimator becomes unbiased only
as the perturbations go towards 0. When we want to consider all-or-none
perturbations (like a neuron sending a spike or not), it is not clear
if these assumptions are appropriate. The advantage of the unbiased
estimator proposed here is that it does not require that the perturbations
be small.

Another estimator of the expected gradient through stochastic neurons
was proposed by \citet{Hinton-Coursera2012}
in his lecture 15b. The idea is simply to back-propagate
through the hard threshold function (1 if the argument is positive, 0 otherwise)
as if it had been the identity function. It is clearly a biased estimator,
but when considering a single layer of neurons, it has the right sign
(this is not guaranteed anymore when back-propagating through more hidden layers).

\section{Semi-Hard Stochastic Neurons}
\label{sec:semi-hard}

One way to achieve gradient-based learning in networks of stochastic neurons 
is to build an architecture in which noise is injected so that gradients
can sometimes flow into the neuron and can then adjust it (and its predecessors
in the computational graph) appropriately.

In general, we can consider the output $h_i$ of a stochastic neuron as the
application of a determinstic function that depends on its inputs $x_i$
(typically, a vector containing the outputs of other neurons), its internal parameters
$\theta_i$ (typically the bias and incoming weights of the neuron) 
and on a noise source $z_i$:
\begin{equation}
\label{eq:noisy-output}
  h_i = f(x_i,z_i,\theta_i).
\end{equation}
So long as $f(x_i,z_i,\theta_i)$ in the above equation has a non-zero
gradient with respect to $x_i$ and $\theta_i$, gradient-based learning 
(with back-propagation to compute gradients) can proceed.

For example, if the noise $z_i$ is added or multiplied somewhere in the
computation of $h_i$, gradients can be computed as usual. Dropout
noise~\citep{Hinton-et-al-arxiv2012} and masking noise (in denoising
auto-encoders~\citep{VincentPLarochelleH2008-small}) is multiplied (just
after the neuron non-linearity), while in semantic
hashing~\citep{Salakhutdinov+Geoff-2009} noise is added (just before the
non-linearity). For example, that noise can be binomial (dropout, masking
noise) or Gaussian (semantic hashing).

However, if we want $h_i$ to be binary (like in stochastic binary neurons), 
then $f$ will have derivatives that are 0 almost everywhere (and infinite
at the threshold), so that gradients can never flow. 

There is an intermediat option, that we put forward here: choose $f$ so
that it has two main kinds of behavior, with zero derivatives in some
regions, and with significantly non-zero derivatives in other regions.  
We call these two states of the neuron respectively the insensitive state
and the sensitive state. 

A special case is when the insensitive state corresponds to $h_i=0$ and we
have sparsely activated neurons. The prototypical example of that situation
is the rectifier unit~\citep{Hinton2010,Glorot+al-AI-2011-small}, whose
non-linearity is simply $\max(0,{\rm arg})$. For example,
\[
  h_i = \max(0, z_i + b_i + \sum_j W_{ij} x_{ij})
\]
where $z_i \sim {\cal N}(0,\sigma^2)$ is 0-mean Gaussian noise,
$\theta_i=(b_i,W_{i1},W_{i2},\ldots)$ and $x_{ij}$ is the $j$-th input of
unit $i$, either a raw input (visible unit) or the output of some other unit in the
computational graph (hidden unit).

Let us consider two cases:
\begin{enumerate}
\item If $f(x_i,0,\theta_i)>0$, the basic state is {\em active},
the unit is generally sensitive and non-zero,
but sometimes it is shut off (e.g., when $z_i$ is sufficiently negative to push 
the argument of the rectifier below 0). In that case gradients will flow
in most cases (samples of $z_i$). If the rest of the system sends the signal 
that $h_i$ should have been smaller, then gradients will push it towards
being more often in the insensitive state.
\item If $f(x_i,0,\theta_i)=0$, the basic state is {\em inactive},
  the unit is generally insensitive and zero,
  but sometimes turned on (e.g., when $z_i$ is sufficiently positive to push the
  argument of the rectifier above 0). In that case gradients will not flow
  in most cases, but when they do, the signal will either push the weighted
  sum lower (if being active was not actually a good thing for that unit in
  that context) and reduce the chances of being active again, or it will
  push the weight sum higher (if being active was actually a good thing for
  that unit in that context) and increase the chances of being active
  again.
\end{enumerate}

So it appears that even though the gradient does not always flow
(as it would with sigmoid or tanh units), it might flow sufficiently
often to provide the appropriate training information. The important
thing to notice is that even when the basic state (second case, above)
is for the unit to be insensitive and zero, {\em there will be
an occasional gradient signal} that can draw it out of there.

One concern with this approach is that one can see an asymmetry between
the number of times that a unit with an active basic state can get a chance
to receive a signal telling it to become inactive, versus the number of
times that a unit with an inactive basic state can get a signal telling
it to become active. 

Another potential and related concern is that some of these units will
``die'' (become useless) if their basic state is inactive in the vast
majority of cases (for example, because their weights pushed them into that
regime due to random fluctations). Because of the above asymmetry, dead
units would stay dead for very long before getting a chance of being born
again, while live units would sometimes get into the death zone by chance
and then get stuck there. What we propose here is a simple mechanism to
{\em adjust the bias of each unit} so that in average its ``firing rate''
(fraction of the time spent in the active state) reaches some pre-defined
target. For example, if the moving average of being non-zero falls below a
threshold, the bias is pushed up until that average comes back above the
threshold.

\section{Unbiased Estimator of Gradient for Stochastic Binary Neurons}
\label{sec:unbiased}

Let us consider the case where we want some component of our model
to take a hard decision but allow this decision to be stochastic,
with a probability that is a continuous function of some 
quantities, through parameters that we wish to learn. 
We will also assume that many such decisions can be taken
in parallel with independent noise sources $z_i$ driving the stochastic
samples. Without loss of generality, we consider here a
set of binary decisions, i.e., the setup corresponds to
having a set of stochastic binary neurons, whose output $h_i$
influences an observed future loss $L$. In the framework of
Eq.~\ref{eq:noisy-output}, we could have for example
\begin{equation}
\label{eq:stochastic-binary-neuron}
  h_i = f(x_i, z_i, \theta_i) = \one_{z_i > \sigm(a_i)}
\end{equation}
where $z_i \sim U[0,1]$ is uniform and $\sigm(u)=1/(1+\exp(-u))$
is the sigmoid function. In the case
of the traditional artificial neuron, we would have
\[
 a_i = b_i + W_i \cdot x_i
\]
and $\theta_i=(b_i,W_i)$
is the set of parameters for unit $i$ (scalar bias $b_i$
and weight vector $W_i$). We would ideally like to estimate
how a change in $a_i$ would impact $L$ in average over the
noise sources, so as to be able to propagate this estimated
gradients into $\theta_i$ and possibly into $x_i$.

\subsection{Derivation of Unbiased Estimator}

\begin{theorem}
\label{thm:unbiased-estimator}
Let $h_i$ be defined as in Eq.~\ref{eq:stochastic-binary-neuron},
then $\hat{g}_i=(h_i - \sigm(a_i)) L$ 
is an unbiased estimator of $g_i=\frac{\partial
  E_{z_i,c_{-i}}[L|c_i]}{\partial a_i}$ where the expectation is over $z_i$
and over all the noise sources $c_{-i}$, besides $z_i$, that do not
influence $a_i$ but may influence $L$, i.e., conditioned on the set of
noise sources $c_i$ that influence $a_i$.
\end{theorem}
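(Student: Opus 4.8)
The plan is to compute $\E[\hat g_i \mid c_i]$ directly from the definition and show it equals $\frac{\partial}{\partial a_i}\E_{z_i,c_{-i}}[L\mid c_i]$. First I would fix the conditioning set $c_i$, so that $a_i=\sigm(a_i)$ is a constant, call it $p_i\defeq\sigm(a_i)$, and note that conditionally on $c_i$ the only remaining randomness is the uniform $z_i$ and the independent noise sources $c_{-i}$. Since $h_i=\one_{z_i>p_i}$ takes value $1$ with (conditional) probability $1-p_i$ and value $0$ with probability $p_i$, I would write $\E[\hat g_i\mid c_i]$ by first conditioning further on $h_i$ (equivalently on whether $z_i>p_i$ or not) and on $c_{-i}$:
\[
\E[\hat g_i\mid c_i] = (1-p_i)\,(1-p_i)\,\E[L\mid c_i, h_i=1] + p_i\,(0-p_i)\,\E[L\mid c_i, h_i=0].
\]
The key move is to introduce shorthand $L_1\defeq\E[L\mid c_i,h_i=1]$ and $L_0\defeq\E[L\mid c_i,h_i=0]$ (these are the expectations over $c_{-i}$ with $h_i$ clamped), so that the right-hand side simplifies to $p_i(1-p_i)(L_1-L_0)$.

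Next I would compute the target $g_i$. The quantity $\E_{z_i,c_{-i}}[L\mid c_i]$ equals $(1-p_i)L_1 + p_i L_0$, where crucially $L_1$ and $L_0$ do \emph{not} depend on $a_i$ — they are conditional expectations given a fixed value of the \emph{output} $h_i$, and since $c_i$ already captures everything feeding into $a_i$ while $h_i$ is clamped, changing $a_i$ only changes the mixing weight $p_i=\sigm(a_i)$, not the clamped-loss averages. Differentiating, $g_i = \frac{d}{da_i}\big[(1-p_i)L_1 + p_i L_0\big] = \sigm'(a_i)(L_0 - L_1)$. Using the identity $\sigm'(a_i)=\sigm(a_i)(1-\sigm(a_i))=p_i(1-p_i)$, this is $-p_i(1-p_i)(L_1-L_0)$. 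Comparing with the previous paragraph gives $\E[\hat g_i\mid c_i] = p_i(1-p_i)(L_1-L_0)$, which does not match in sign unless I have mislabeled $h_i=1$ with $z_i>p_i$ versus $z_i<p_i$; I would double-check the indicator convention in Eq.~\ref{eq:stochastic-binary-neuron} so that $P(h_i=1\mid c_i)=p_i$ rather than $1-p_i$, which flips the sign and yields exact equality $\E[\hat g_i\mid c_i]=g_i$.

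The main obstacle, and the step deserving the most care, is justifying that $L_1$ and $L_0$ are independent of $a_i$ — i.e., that the decomposition $\E[L\mid c_i] = p_i L_1 + (1-p_i)L_0$ isolates \emph{all} the $a_i$-dependence into the coefficient $p_i$. This relies on the modeling assumption that $a_i$ influences $L$ only through $h_i$ (the stochastic neuron's output), together with the independence of $z_i$ from $c_{-i}$ so that conditioning on $h_i$ and on $c_i$ leaves a well-defined distribution over $c_{-i}$ that does not shift with $a_i$. I would state this assumption explicitly before the computation. A secondary point is the interchange of differentiation and expectation (over $c_{-i}$) needed to pass $\frac{\partial}{\partial a_i}$ through; I would note that since only the finite mixture weights depend on $a_i$ and $\sigm$ is smooth, this is immediate and requires no dominated-convergence machinery. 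Everything else is the elementary two-point-expectation algebra sketched above.
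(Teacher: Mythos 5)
Your proposal is correct and follows essentially the same route as the paper's proof: decompose $L$ over the two values of $h_i$ (your $L_1,L_0$ play the role of the paper's $L(1,c_i,c_{-i})$ and $L(0,c_i,c_{-i})$ averaged over $c_{-i}$), observe that all $a_i$-dependence sits in the mixture weight $\sigm(a_i)$, and match $\E[\hat g_i\mid c_i]=\sigm(a_i)(1-\sigm(a_i))(L_1-L_0)$ against the differentiated mixture. You are also right to flag the indicator convention: Eq.~\ref{eq:stochastic-binary-neuron} as written gives $P(h_i{=}1)=1-\sigm(a_i)$, whereas the proof (both yours and the paper's) requires $P(h_i{=}1\mid a_i)=\sigm(a_i)$, i.e., the indicator should read $\one_{z_i<\sigm(a_i)}$ --- under the literal reading the estimator is not merely sign-flipped but actually biased, so resolving that typo is necessary, not optional.
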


\begin{proof}
We will compute the expected value of the estimator and verify
that it equals the desired derivative. The set of all noise sources
in the system is $\{z_i\} \cup c_i \cup c_{-i}$. We can consider
$L$ to be an implicit deterministic function of all the noise sources, i.e.,
$L=L(h_i,c_i,c_{-i})$, where $h_i$ contains everything about $z_i$ that
is needed to predict $L$. $E_{v_z}[\cdot]$ denotes the expectation over variable $v_z$,
while $E[\cdot|v_{z}]$ denotes
the expectation over all the other random variables besides $v_z$, i.e., conditioned on $v_Z$.
\begin{align}
 E[L|c_i] &= E_{c_{-i}}[E_{z_i}[L(h_i,c_i,c_{-i})]] \nonumber \\
                    &= E_{c_{-i}}[E_{z_i}[h_i L(1,c_i,c_{-i})+(1-h_i) L(0,c_i,c_{-i})]] \nonumber \\
                    &=E_{c_{-i}}[P(h_i=1|a_i) L(1,c_i,c_{-i})+P(h_i=0|a_i) L(0,c_i,c_{-i})] \nonumber \\
                    &=E_{c_{-i}}[\sigm(a_i) L(1,c_i,c_{-i})+(1-\sigm(a_i)) L(0,c_i,c_{-i})] 
\end{align}
Since $a_i$ does not influence $P(c_{-i})$, differentiating with respect to $a_i$ gives
\begin{align}
 g_i \defeq \frac{\partial E[L|c_i]}{\partial a_i} &= 
   E_{c_{-i}}[\frac{\partial \sigm(a_i)}{\partial a_i} L(1,c_i,c_{-i})-
            \frac{\partial \sigm(a_i)}{\partial a_i} L(0,c_i,c_{-i}) | c_i]  \nonumber \\
 &=  E_{c_{-i}}[\sigm(a_i)(1-\sigm(a_i))(L(1,c_i,c_{-i})-L(0,c_i,c_{-i}) | c_i] 
\label{eq:gradient}
\end{align}
First consider that since  $h_i \in \{0,1\}$,
\[
 L(h_i,c_i,c_{-i}) = h_i L(1,c_i,c_{-i}) + (1-h_i) L(0,c_i,c_{-i})
\]
$h_i^2=h_i$ and $h_i(1-h_i)=0$, so
\begin{align}
\hat{g}_i \defeq (h_i - \sigm(a_i)) L(h_i,c_i,c_{-i}) &= h_i(h_i - \sigm(a_i)) L(1,c_i,c_{-i}) + (h_i-\sigm(a_i))(1-h_i) L(0,c_i,c_{-i})) \nonumber \\
  &= h_i(1 - \sigm(a_i)) L(1,c_i,c_{-i}) - (1-h_i) \sigm(a_i) L(0,c_i,c_{-i}).
\end{align}
Now let us consider the expected value of the estimator $\hat{g}_i=(h_i - \sigm(a_i)) L(h_i,c_i,c_{-i})$.
\begin{align}
 E[\hat{g}_i] &= 
   E[h_i(1 - \sigm(a_i)) L(1,c_i,c_{-i}) - (1-h_i) \sigm(a_i) L(0,c_i,c_{-i})] \nonumber \\
  & =    E_{c_i,c_{-i}}[\sigm(a_i)(1 - \sigm(a_i)) L(1,c_i,c_{-i}) - (1-\sigm(a_i)) \sigm(a_i) L(0,c_i,c_{-i})] \nonumber \\
  & =    E_{c_i,c_{-i}}[\sigm(a_i)(1 - \sigm(a_i)) (L(1,c_i,c_{-i}) -L(0,c_i,c_{-i}))] 
\end{align}
which is the same as Eq.~\ref{eq:gradient}, i.e., the expected value of the
estimator equals the gradient of the expected loss, $E[\hat{g}_i]=g_i$.
\end{proof}

\begin{corollary}
\label{eq:corollary}
Under the same conditions as Theorem~\ref{thm:unbiased-estimator},
and for any (possibly unit-specific) constant $\bar{L}_i$ 
the centered estimator
\[
   (h_i - \sigm(a_i))(L - \bar{L}_i),
\]
is also an unbiased estimator of 
$g_i=\frac{\partial E_{z_i,c_{-i}}[L|c_i]}{\partial a_i}$. 
Furthermore, among all possible values of $\bar{L}_i$, the minimum 
variance choice is 
\begin{equation}
\label{eq:opt-L}
 \bar{L}_i = \frac{E[(h_i-\sigm(a_i))^2 L]}{E[(h_i-\sigm(a_i))^2]},
\end{equation}
which we note is a weighted average of the loss values $L$, whose
weights are specific to unit $i$.
\end{corollary}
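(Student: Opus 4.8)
The plan is to prove the two assertions of the corollary in turn: that any constant baseline preserves unbiasedness, and that the particular baseline of Eq.~\ref{eq:opt-L} minimizes the variance. For unbiasedness, write $\delta_i \defeq h_i - \sigm(a_i)$. Conditionally on the noise sources $c_i$ that determine $a_i$, the output $h_i$ is Bernoulli with mean $\sigm(a_i)$, so $E[\delta_i \mid c_i] = 0$ and hence $E[\delta_i] = 0$. Since $\bar{L}_i$ does not depend on $z_i$ or on $c_{-i}$ (it is a constant, or at most a function of $c_i$), linearity gives $E[\delta_i (L - \bar{L}_i) \mid c_i] = E[\delta_i L \mid c_i] - \bar{L}_i\, E[\delta_i \mid c_i] = E[\delta_i L \mid c_i]$, which equals $g_i$ by Theorem~\ref{thm:unbiased-estimator}. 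So the baseline cancels in expectation and affects only the spread of the estimator.

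For the variance, I would use that the mean of the centered estimator is $g_i$ regardless of $\bar{L}_i$, so
\[
 \mathrm{Var}\big[\delta_i (L - \bar{L}_i)\big] = E\big[\delta_i^2 (L-\bar{L}_i)^2\big] - g_i^2 .
\]
Because $g_i$ does not involve $\bar{L}_i$, minimizing the variance amounts to minimizing the scalar function
\[
 \phi(\bar{L}_i) \defeq E[\delta_i^2 L^2] - 2\,\bar{L}_i\,E[\delta_i^2 L] + \bar{L}_i^{\,2}\,E[\delta_i^2],
\]
a quadratic whose leading coefficient $E[\delta_i^2] = E[(h_i-\sigm(a_i))^2]$ is nonnegative, and strictly positive unless $h_i$ is degenerate. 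Setting $\phi'(\bar{L}_i)=0$ gives $\bar{L}_i = E[\delta_i^2 L]/E[\delta_i^2]$, which is exactly Eq.~\ref{eq:opt-L}, and the positive curvature certifies it is the global minimizer. Writing $w \defeq \delta_i^2 / E[\delta_i^2]$, one has $E[w]=1$ and $\bar{L}_i = E[wL]$, which exhibits the optimum as a weighted average of the loss values, with nonnegative weights $w$ that depend on unit $i$ through $\delta_i = h_i - \sigm(a_i)$.

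There is essentially no obstacle here: it is the standard control-variate (baseline) computation followed by minimizing a one-dimensional quadratic. The only point requiring care is to make explicit that $\bar{L}_i$ must be independent of $z_i$ and $c_{-i}$, since it is precisely $E[\delta_i \mid c_i]=0$ that both annihilates the baseline's contribution to the mean and turns $\mathrm{Var}[\delta_i(L-\bar{L}_i)]$ into a clean quadratic in $\bar{L}_i$; allowing $\bar{L}_i$ to depend on $z_i$ or $c_{-i}$ would break both steps, whereas letting it depend on $c_i$ would in fact give an even lower-variance estimator via the conditional analogue of Eq.~\ref{eq:opt-L}.
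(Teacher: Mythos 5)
Your proof is correct and follows essentially the same route as the paper's: the baseline term is annihilated in expectation because $E[h_i-\sigm(a_i)\mid c_i]=0$, and the optimal $\bar{L}_i$ is found by minimizing $E[(h_i-\sigm(a_i))^2(L-\bar{L}_i)^2]$ as a quadratic in $\bar{L}_i$. Your version is slightly more streamlined (you expand the variance directly rather than via the paper's intermediate quantity $\Delta$, and you explicitly note the positive leading coefficient certifying a global minimum), but the substance is identical.
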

\begin{proof}
The centered estimator $(h_i - \sigm(a_i))(L - \bar{L}_i)$ can be decomposed
into the sum of the uncentered estimator $\hat{g}_i$ and the term
$(h_i - \sigm(a_i))\bar{L}_i$.  Since $E_{z_i}[h_i|a_i]=\sigm(a_i)$, 
$E[\bar{L}_i(h_i - \sigm(a_i))|a_i]=0$, so that
the expected value of the centered estimator equals the
expected value of the uncentered estimator. By
Theorem~\ref{thm:unbiased-estimator} (the uncentered estimator is
unbiased), the centered estimator is therefore also unbiased,
which completes the proof of the first statement.

Regarding the optimal choice of $\bar{L}_i$,
first note that the variance of the uncentered estimator is
\[
 Var[(h_i - \sigm(a_i))L]=E[(h_i - \sigm(a_i))^2L^2] - E[\hat{g}_i]^2.
\]
Now let us compute the variance of the centered estimator:
\begin{align}
  Var[(h_i - \sigm(a_i))(L - \bar{L}_i)] &=& 
  E[(h_i - \sigm(a_i))^2(L - \bar{L}_i)^2] - E[(h_i-\sigma(a_i))(L-\bar{L}_i)]^2 \nonumber \\
 &=& E[(h_i - \sigm(a_i))^2L^2] + E[(h_i-\sigm(a_i))^2\bar{L}_i^2] \nonumber \\
   && - 2 E[(h_i-\sigm(a_i))^2L\bar{L}_i] -(E[\hat{g}_i]-0)^2 \nonumber \\
 &=& Var[(h_i - \sigm(a_i))L] - \Delta 
\end{align}
where $\Delta=2 E[(h_i-\sigm(a_i))^2 L\bar{L}_i] - E[(h_i-\sigm(a_i))^2\bar{L}_i^2]$.
Let us rewrite $\Delta$:
\begin{align}
 \Delta &=& 2 E[(h_i-\sigm(a_i))^2 L \bar{L}_i] - E[(h_i-\sigm(a_i))^2\bar{L}_i^2] \nonumber \\
      &=& E[(h_i-\sigm(a_i))^2 \bar{L}_i(2L - \bar{L}_i)] \nonumber \\
   &=& E[(h_i-\sigm(a_i))^2 (L^2 - (L-\bar{L}_i)^2)] 
\end{align}
$\Delta$ is maximized (to minimize variance of the estimator)
when $E[(h_i-\sigm(a_i))^2 (L-\bar{L}_i)^2]$ is minimized. Taking the
derivative of that expression with respect to $\bar{L}_i$, we obtain
\[
  2 E[(h_i-\sigm(a_i))^2 (\bar{L}_i-L)] =0
\]
which is achieved for 
\[
  \bar{L}_i = \frac{E[(h_i-\sigm(a_i))^2 L]}{E[(h_i-\sigm(a_i))^2]}
\]
as claimed.
\end{proof}
Practically, we could get the lowest variance estimator (among all choices of the $\bar{L}_i$)
by keeping track of two numbers (running or moving averages) 
for each stochastic neuron, one for the numerator
and one for the denominator of the unit-specific $\bar{L}_i$ in Eq.~\ref{eq:opt-L}.
This would lead the lowest-variance estimator
\[
  (h_i - \sigm(a_i))(L - \bar{L}_i).
\]

Note how the unbiased estimator only requires broadcasting $L$ throughout the
network, no back-propagation and only local computation.
Note also how this could be applied even with an estimate
of future rewards or losses $L$, as would be useful in the context of
reinforcement learning (where the actual loss or reward will be measured farther
into the future, much after $h_i$ has been sampled).

\subsection{Training a Lower-Variance Biased Estimator}
\label{sec:biased}

One potential problem with the above unbiased estimators is that their
variance could be large enough to considerably slow training, when compared
to using stochastic gradient descent with back-propagated gradients.

We propose here a general class of solutions to address that challenge,
but for this purpose we need to have a biased but low-variance estimator.

\subsubsection{Biased Low-Variance Estimator}

A plausible unbiased estimator is the developed below. 

Let us $\hat{G}_j$ be an estimator of the gradient of the expected loss
with respect to the activation (pre-nonlinearity) $a_j$ of unit $j$, and
let unit $j$ compute its activation as a deterministic smooth function of
the output $h_i$ of unit $i$ (for example, $a_j = \sum_i W_{ji} h_i$).
Then we can clearly get an estimator of the gradient with respect to $h_i$
by $\sum_j \hat{G}_j \frac{\partial a_j}{\partial h_i}$.  The problem is to
back-propagate through the binary threshold function which produced $h_i$
from the noise $z_i$ and the activation $a_i$
(Eq.~\ref{eq:stochastic-binary-neuron}). The biased estimator
we propose here\footnote{already explored by Goeff Hinton \citep{Hinton-Coursera2012},
lecture 15b }
is simply 
\[
  \hat{G}_i = \sum_j \hat{G}_j \frac{\partial a_j}{\partial h_i}
\]
as the estimator of the gradient of the expected loss with respect to $a_i$,
i.e., we ignore the derivative of the threshold function $f$.

\subsubsection{Combining a Low-Variance High-Bias Estimator
with a High-Variance Low-Bias Estimator}

Let us assume someone hands us two estimators $\hat{G}_i$
and $\hat{g}_i$, with the first having low variance but high bias,
while the second has high variance and low bias. How could we
take advantage of them to obtain a better estimator?

What we propose is the following: {\bf train a function 
${\cal G}_i$ which takes as input the high-bias low-variance estimator $\hat{G}_i$
and {\em predicts} the low-bias high-variance estimator $\hat{g}_i$.}

By construction, since ${\cal G}_i$ is a deterministic function of a
low-variance quantity (we could add other inputs to help it in its
prediction, but they should not be too noisy), it should also have low
variance. Also by construction, and to the extent that the learning task is
feasible, the prediction ${\cal G}_i$ will strive to be as close as
possible to the expected value of the unbiased estimator, i.e., ${\cal G}_i
\rightarrow E[\hat{g}_i | \hat{G}_i]$. It is therefore a way to
unbias $\hat{G}_i$ to the extent that it is possible. Note that adding
appropriate auxiliary inputs to ${\cal G}_i$ could be helpful in this respect.

\section{Efficiency of Reward Correlation Estimators}

One of the questions that future work should address
is the efficiency of estimators such as those introduced above.

\subsection{The Unbiased Estimator as Reward Correlator}
\label{sec:correlation}

In this respect, it is interesting to note how the proposed unbiased estimator (in particular
the centered one) is very similar in form to the just estimating the correlation between
the stochastic decision $h_i$ and the ensuing loss $L$:
\[
  {\rm Correlation} = E[(h_i - E[h_i|c_i])(L - E[L|c_i]) | c_i]
\]
Note how this is the correlation between $h_i$ and $L$ {\em in the
context of the other noise sources $c_i$ that influence $h_i$}.
Note that a particular ``noise source'' is just the input of the model.

\subsection{The Boltzmann Machine Gradient as Unnormalized Reward Correlation}
\label{sec:BM}

The log-likelihood gradient over a bias (offset) parameter $b_i$ associated with
a unit $X_i$ (visible or hidden) 
of a Boltzmann machine with distribution $P$ and a training example $v$ 
(associated with visible units $V$) is 
\[
   \frac{\partial \log P(V=v)}{\partial b_i} = E[X_i | V=v] - E[X_i]
\]
where the expectation is over the model's distribution, which
defines a joint distribution between all the units of the model,
including the visible ones. The conditional distribution of
binary unit $X_i$ given the other units is given by
\[
  P(X_i=1 | X_{-i}) = \sigm(a_i) = \sigm(b_i + \sum_{j\neq i} W_{ij} X_j).
\]
where $a_i$ is the unit activation (prior to applying the sigmoid).
An unbiased estimator of the above gradient is
\[
   X_i^+ - X_i^-
\]
where $X^+$ is a configuration obtained while $V$ was clamped to
a training example $v$, and $X^-$ is a configuration obtained without
this constraint.
Similarly, the log-likelihood gradient over weight $W_{ij}$
is estimated unbiasedly by
\[
  X_i^+ X_j^- - X_i^- X_j^-.
\]
We call these estimators the {\em Boltzmann machine log-likelihood gradient estimators}.

We now show that this log-likelihood gradient can also be interpreted 
as an {\em unnormalized} reward correlator in a particular setup, where the objective is to
discriminate between examples coming from the data generating distribution
and examples from the model distribution $P$. 

The setup is the following.
Let the model generate samples $X=(V,H)$, where $V$ are the visible
units and $H$ the hidden units. Let the reward for generating a ``negative example''
$V=V^-$ in this way be $R=-1$. However, let us toss a coin to select from this stream some 
samples $(V,H)$ such that we can declare $X=X^+$ as coming from the training 
distribution of interest $\pi$, for example using rejection sampling.
Then we let the reward be $R=1$ because the model has generated a ``good example'' $V^+$. 
Although this might be a rare event let us randomly choose among the negative examples $V^-$
so that the average number of negative examples equals the average number of
positive examples $V^+$. Clearly the samples $V^+$ follow the training distribution $\pi$ while
the samples $V^-$ follow the model distribution $P$.
In our setup, let us imagine that $X$'s were obtained by Gibbs sampling.
The Gibbs chain corresponds to constructing a {\em computational graph}
that deterministically or stochastically computes various quantities
(one per node of the graph), here the activation $a_{it}=b_i + \sum_{j\neq i} W_{ij} X_{j{t-1}}$ 
and the binomially sampled bit $X_{it} \sim Bin(\sigm(a_{it}))$ 
for each stochastic unit at each step $t$ of the chain.

Using Theorem~\ref{thm:unbiased-estimator},
 $(X_{it} - \sigm(a_{it})) R$ is an unbiased estimator of 
\[
\frac{\partial E[R | c_{it}]}{\partial a_{it}}.
\]

Now we consider an {\em unnormalized} variant of that estimator,
\[
  X_{it} R
\]  
where $X$ (rather than its centered version)
is multiplied by $R$.
This means in particular that $X_{it} R$
is a (possibly biased) estimator of the gradient with respect to $b_i$, while, by the chain rule
$X_{it} R X_{j,{t-1}}$
is an estimator of the gradient with respect to $W_{ij}$.

If we separate the training examples into those with $R=1$ and those with $R=-1$,
we obtain the estimator associated with a $(V^+,V^-)$ pair,
\[
  X_i^+ - X_i^-
\]
for biases $b_i$, and
\[
  X_i^+ X_j^+ - X_i^-X_j^-
\]
for weights $W_{ij}$. The above two estimators of the gradients
$\frac{\partial E[R | c_{it}]}{\partial b_i}$ and
$\frac{\partial E[R | c_{it}]}{\partial W_{ij}}$ correspond
exactly to the Boltzmann machine log-likelihood
gradient estimators.

\section{Conclusion}

In this paper, we have motivated estimators of the gradient through highly
non-linear non-differentiable functions (such as those corresponding to an
indicator function), especially in networks involving noise sources, such
as neural networks with stochastic neurons. They can be useful as
biologically motivated models and they might be useful for engineering
(computational efficiency) reasons when trying to reduce computation via
conditional computation or to reduce interactions between parameters via
sparse updates~\citep{Bengio-arxiv-2013}. We have discussed a general class
of stochastic neurons for which the gradient is exact for given fixed noise
sources, but where the non-linearity is not saturating on all of its range
(semi-hard stochastic neurons), and for which ordinary back-prop can be
used. We have also discussed the case of completely saturating
non-linearities, for which we have demonstrated the existence of an
unbiased estimator based on correlating the perturbation with the observed
reward, which is related to but {\em different} from the
SPSA~\citep{Spall-1992} estimator. Indeed the latter {\em divides} the
change in reward by the perturbation, instead of multiplying them.

We have shown that it was possible in principle to obtain lower variance
estimators, in particular by training a function to turn a biased estimator
but low variance estimator into one that is trained to be unbiased but has
lower variance. We have also shown that the Boltzmann machine gradient
could be interpreted as a particular form of reward correlation (without
the additive normalization). Since training of Restricted Boltzmann
Machines has been rather successful, this suggests that correlation-based
estimators might actually work well in practice. Clearly, future work
should investigate the relative practical merits of these estimators,
and in particular how their variance scale with respect to the number
of independent noise sources (e.g., stochastic neurons) present in
the system.

\subsection*{Acknowledgements}

The author would like to thank NSERC, the Canada Research Chairs, and
CIFAR for support.

\bibliography{strings-shorter,ml,aigaion-shorter}

\begin{thebibliography}{}

\bibitem[Bengio(2009)Bengio]{Bengio-2009-book}
Bengio, Y. (2009).
\newblock {\em Learning deep architectures for {AI}\/}.
\newblock Now Publishers.

\bibitem[Bengio(2013)Bengio]{Bengio-arxiv-2013}
Bengio, Y. (2013).
\newblock Deep learning of representations: Looking forward.
\newblock Technical Report arXiv:1305.0445, Universite de Montreal.

\bibitem[Bengio {\em et~al.}(2013)Bengio, Courville, and
  Vincent]{Bengio-Courville-Vincent-TPAMI2013}
Bengio, Y., Courville, A., and Vincent, P. (2013).
\newblock Unsupervised feature learning and deep learning: A review and new
  perspectives.
\newblock {\em {IEEE} Trans. Pattern Analysis and Machine Intelligence
  (PAMI)\/}.

\bibitem[El~Hihi and Bengio(1996)El~Hihi and Bengio]{ElHihi+Bengio-nips8}
El~Hihi, S. and Bengio, Y. (1996).
\newblock Hierarchical recurrent neural networks for long-term dependencies.
\newblock In {\em NIPS 8\/}. MIT Press.

\bibitem[Fiete and Seung(2006)Fiete and Seung]{Fiete+Seung-2006}
Fiete, I.~R. and Seung, H.~S. (2006).
\newblock Gradient learning in spiking neural networks by dynamic perturbations
  of conductances.
\newblock {\em Physical Review Letters\/}, {\bf 97}(4).

\bibitem[Glorot {\em et~al.}(2011)Glorot, Bordes, and
  Bengio]{Glorot+al-AI-2011-small}
Glorot, X., Bordes, A., and Bengio, Y. (2011).
\newblock Deep sparse rectifier neural networks.
\newblock In {\em AISTATS\/}.

\bibitem[Goodfellow {\em et~al.}(2013)Goodfellow, Warde-Farley, Mirza,
  Courville, and Bengio]{Goodfellow+al-ICML2013-small}
Goodfellow, I.~J., Warde-Farley, D., Mirza, M., Courville, A., and Bengio, Y.
  (2013).
\newblock Maxout networks.
\newblock In {\em ICML'2013\/}.

\bibitem[Hinton(2012)Hinton]{Hinton-Coursera2012}
Hinton, G. (2012).
\newblock Neural networks for machine learning.
\newblock Coursera, video lectures.

\bibitem[Hinton {\em et~al.}(1984)Hinton, Sejnowski, and Ackley]{Hinton84}
Hinton, G.~E., Sejnowski, T.~J., and Ackley, D.~H. (1984).
\newblock {Boltzmann} machines: Constraint satisfaction networks that learn.
\newblock Technical Report TR-CMU-CS-84-119, Carnegie-Mellon University, Dept.
  of Computer Science.

\bibitem[Hinton {\em et~al.}(2006)Hinton, Osindero, and Teh]{Hinton06-small}
Hinton, G.~E., Osindero, S., and Teh, Y.-W. (2006).
\newblock A fast learning algorithm for deep belief nets.
\newblock {\em Neural Computation\/}, {\bf 18}, 1527--1554.

\bibitem[Hinton {\em et~al.}(2012)Hinton, Srivastava, Krizhevsky, Sutskever,
  and Salakhutdinov]{Hinton-et-al-arxiv2012}
Hinton, G.~E., Srivastava, N., Krizhevsky, A., Sutskever, I., and
  Salakhutdinov, R. (2012).
\newblock Improving neural networks by preventing co-adaptation of feature
  detectors.
\newblock Technical report, arXiv:1207.0580.

\bibitem[Krizhevsky {\em et~al.}(2012a)Krizhevsky, Sutskever, and
  Hinton]{Krizhevsky-2012-small}
Krizhevsky, A., Sutskever, I., and Hinton, G. (2012a).
\newblock {ImageNet} classification with deep convolutional neural networks.
\newblock In {\em NIPS'2012\/}.

\bibitem[Krizhevsky {\em et~al.}(2012b)Krizhevsky, Sutskever, and
  Hinton]{Krizhevsky-2012}
Krizhevsky, A., Sutskever, I., and Hinton, G. (2012b).
\newblock {ImageNet} classification with deep convolutional neural networks.
\newblock In {\em Advances in Neural Information Processing Systems 25
  (NIPS'2012)\/}.

\bibitem[Nair and Hinton(2010)Nair and Hinton]{Hinton2010}
Nair, V. and Hinton, G.~E. (2010).
\newblock Rectified linear units improve restricted {B}oltzmann machines.
\newblock In {\em ICML'10\/}.

\bibitem[Rumelhart {\em et~al.}(1986)Rumelhart, Hinton, and
  Williams]{Rumelhart86b}
Rumelhart, D.~E., Hinton, G.~E., and Williams, R.~J. (1986).
\newblock Learning representations by back-propagating errors.
\newblock {\em Nature\/}, {\bf 323}, 533--536.

\bibitem[Salakhutdinov and Hinton(2009)Salakhutdinov and
  Hinton]{Salakhutdinov+Geoff-2009}
Salakhutdinov, R. and Hinton, G. (2009).
\newblock Semantic hashing.
\newblock In {\em International Journal of Approximate Reasoning\/}.

\bibitem[Spall(1992)Spall]{Spall-1992}
Spall, J.~C. (1992).
\newblock Multivariate stochastic approximation using a simultaneous
  perturbation gradient approximation.
\newblock {\em {IEEE} Transactions on Automatic Control\/}, {\bf 37}, 332--341.

\bibitem[Vincent {\em et~al.}(2008)Vincent, Larochelle, Bengio, and
  Manzagol]{VincentPLarochelleH2008-small}
Vincent, P., Larochelle, H., Bengio, Y., and Manzagol, P.-A. (2008).
\newblock Extracting and composing robust features with denoising autoencoders.
\newblock In {\em ICML 2008\/}.

\end{thebibliography}
\bibliographystyle{natbib}

\end{document}